\newtheorem{corollary}{Corollary}
\newtheorem{proposition}{Proposition}
\newtheorem{lemma}{Lemma}
\DeclareMathOperator*{\argmin}{\arg\!\min}
\renewcommand{\v}{{\bf v}}
\renewcommand{\u}{{\bf u}}
\title{Wasserstein K-Means for Clustering\\Tomographic Projections}
\author{%
    Rohan Rao \\
    Princeton University\\
    \texttt{rohanr@princeton.edu } \\
    \And
    Amit Moscovich \\
    Princeton University\\
    \texttt{amit@moscovich.org} \\
    \And
    Amit Singer \\
    Princeton University \\
    \texttt{amits@math.princeton.edu} \\
}
\begin{document}

\maketitle

\begin{abstract}
Motivated by the 2D class averaging problem in single-particle cryo-electron microscopy (cryo-EM),
we present a k-means algorithm based on a rotationally-invariant Wasserstein metric for images.
Unlike existing methods  that are based on Euclidean ($L_2$) distances,
we prove that the Wasserstein metric better accommodates for the out-of-plane angular differences between different particle views.
We demonstrate on a synthetic dataset that our method gives superior results compared to an $L_2$ baseline.
Furthermore, there is little computational overhead, thanks to the use of a fast linear-time approximation to the Wasserstein-1 metric, also known as the Earthmover's distance.
\end{abstract}

%%%%%%%%%%%%%%%%%%%%%%%%%%%%%%%%%%%%%%%%%%%%%%%%%%%%%%%%%%%%%%%%%%%%%%%%%%%%%%%%%%%%%%%%%%%%%%%%%%%%%%%%%%%%%%%%%
\section{Introduction}
Single particle cryo-electron microscopy (cryo-EM) is a powerful method for reconstructing the 3D structure of individual proteins and other macromolecules \citep{Frank2006,Cheng2018}.
In a cryo-EM experiment, a sample of the molecule of interest is rapidly frozen, forming a thin sheet of vitreous ice, and then imaged using a transmission electron microscope.
This results in a set of images that contain noisy tomographic projections of the electrostatic potential of the molecule at random orientations.
The images then undergo a series of processing steps, including:
\begin{itemize}
    \item Motion correction.
    \item Estimation of the contrast transfer function (CTF) or point-spread function.
    \item Particle picking, where the position of the molecules in the ice sheet is identified.
    \item 2D class averaging, where similar particle images are clustered together and averaged.
    \item Filtering of bad images (typically, entire clusters from the previous step).
    \item Reconstruction of a 3D molecular model (or models).
\end{itemize}
All of these steps are done using special-purpose software, for example: \citet{TangEtal2007,Scheres2012b,LyumkisEtal2013,Xmipp2013,RohouGrigorieff2015,PunjaniEtal2017,HeimowitzAndenSinger2018,GrantRohouGrigorieff2018}.

Cryo-EM has been gaining popularity and its importance has been recognized in the 2017 Nobel prize in Chemistry \citep{CryoEMNobel2017}. An examination of the Protein Data Bank \citep{BermanEtal2000} shows that in 2020, 16\% of entries had structures that were determined by cryo-EM, compared to 12\% in 2019, and 7\% in 2018. 
For more on cryo-EM and its current challenges, see the reviews of \cite{VinothkumarHenderson2016,Lyumkis2019,SingerSigworth2020}.

\subsection{Image formation in cryo-EM}

Given a 3D molecule with an electrostatic potential function $\rho: \mathbb{R}^3 \rightarrow \mathbb{R}$ (henceforth ``particle''), the images captured by the electron microscope are modeled as noisy tomographic projections along different orientations.
A tomographic projection of the particle onto the x-y plane is defined as $\left(\mathcal{T}\rho\right)(x,y) = \int_\mathbb{R} \rho(x,y,z)dz$.
More general tomographic projections are given by a rotation $R$ of the potential function $\rho$ followed by projection onto the x-y plane, 
\begin{equation}
    \mathcal{T}_R \rho = \int_\mathbb{R} \left( R \rho \right)(x,y,z)dz
\end{equation}
where $R \in \text{SO}(3)$, and we define $(R\rho)(x,y,z) = \rho(R^T(x,y,z))$.
Particle images in cryo-EM are typically modeled as follows,
\begin{align}
    \text{Image} = h * \mathcal{T}_{R}\rho + {\boldsymbol \epsilon},
\end{align}
where $h$ is a point spread function that is convolved with the projection and ${\boldsymbol \epsilon}$ is Gaussian noise.
See Figure \ref{fig:experiments-3D-ribosome} for an example of a tomographic projection. We define the viewing angle of our particle as a unit vector $\v$ that represents orientation of our particle $R \in \mathrm{SO}(3)$ modulo in-plane rotations (which can be represented as members of $\mathrm{SO}(2)$).

\subsection{Clustering tomographic projections}
The imaging process in cryo-EM involves high-levels of noise.
To improve the signal-to-noise ratio (SNR), it is common to cluster the noisy projection images with ones that are similar up to an in-plane rotations.
This clustering task is called ``2D classification and averaging'' in the cryo-EM literature and the clusters are called classes.
The results of this stage have multiple uses, including:
\begin{itemize}
    \item Template selection from 2D class averages for particle picking \citep{FrankWagenkrecht1983,ChenGrigorief2007,Scheres2014}.
    \item Particle sorting \citep{ZhouMoscovichBendoryBartesaghi2020}, discarding images that belong to bad clusters.
    \item Visual assessment and symmetry detection.
    \item Ab-initio modeling, where an initial 3D model (or set of models) is constructed to be refined in later stages \citep{GreenbergShkolnisky2017,PunjaniEtal2017}. 
\end{itemize}
Existing methods for 2D clustering include the reference-free alignment algorithm that tries to find a global alignment of all images \citep{PenczekRadermacherFrank1992}, clustering based on invariant functions and multivariate statistical analysis \citep{SchatzVanheel1990}, rotationally invariant k-means \citep{PenczekZhuFrank1996} and the Expectation-Maximization (EM) based approach of \cite{ScheresValleCarazo2005}.
All of these methods use the $L_2$ distance metric on rotated images.

Let us consider a simple centroid+noise model, where each image in a cluster is generated by adding Gaussian noise to a particular clean view ${\boldsymbol \mu}$ that is the (oracle) centroid,
\begin{align}
    \text{Image}_i \sim \mathcal{N}({\boldsymbol \mu}, \sigma^2 I).
\end{align}
In that case the $L_2$ metric is (up to constants) nothing but the log-likelihood of the image patch, conditioned on $\boldsymbol \mu$,
\begin{align}
    \log \mathcal L(\text{Image}_i | {\boldsymbol \mu}) = -\|\text{Image}_i - {\boldsymbol \mu}\|^2/2 \sigma^2 + C_\sigma.
\end{align}
Under the centroid+noise model, the commonly-used $L_2$ distance metric seems perfectly suitable.
However, real particle images have many other sources of variability, including  angular differences, in-plane shifts and various forms of molecular heterogeneity.
For these sources of variability, the $L_2$ distance metric is ill-suited.
See Section \ref{sec:theory} for more on this.

In this work we propose to use the Wasserstein-1 metric $W_1$, also known as the Earthmover's distance, as an alternative to the $L_2$ distance for comparing particle images.
In Section \ref{sec:methods} we describe a variant of the rotationally invariant k-means that is based on a fast linear-time approximation of $W_1$ and in Section \ref{sec:results} we demonstrate superior performance on a synthetic dataset of Ribosome projections.
In Section \ref{sec:theory} we analyze the behavior of the $W_1$ and $L_2$ metrics theoretically with respect to angular differences of tomographic projections.
In particular we show that the rotationally-invariant $W_1$ metric has the nice property that it is bounded from above by the angular difference of the projections.
On the other hand, the $L_2$ metric shows no such relation.

%%%%%%%%%%%%%%%%%%%%%%%%%%%%%%%%%%%%%%%%%%%%%%%%%%%%%%%%%%%%%%%%%%%%%%%%%%%%%%%%%%%%%%%%%%%%%%%%%%%%%%%%%%%%%%%%%
\section{Methods} \label{sec:methods}
%%%%%%%%%%%%%%%%%%%%%%%%%%%%%%%%%%%%%%%%%%%%%%%%%%%%%%%%%%%%%%%%%%%%%%%%%%%%%%%%%%%%%%%%%%%%%%%%%%%%%%%%%%%%%%%%%

In cryo-EM, in-plane rotations of the molecular projections are assumed to be uniformly distributed, hence is is desirable for the distance metric to be invariant to in-plane rotations.
A natural candidate is the rotationally-invariant Euclidean distance,
\begin{equation}
    L_2^R(I_1, I_2) := \min_{R \in \mathrm{SO}(2)} ||I_1 - R I_2||_2.
    \label{eq:rotationally_invariant_L2}
\end{equation}
A drawback of this metric is that visually similar projection images that have a small viewing angle difference can have an arbitrarily large $L_2$ distance.
See discussion in Section \ref{sec:theory}.

To address this, we define a metric based on the Wasserstein-$p$ Metric  between two probability distributions \citep{Villani2003}. This metric measures the ``work'' it takes to transport the mass of one probability distribution to the other, where work is defined as the amount of mass times the distance (on the underlying metric space) between the origin and destination of the mass. More formally, the Wasserstein-$p$ distance between two normalized greyscale $N \times N$ images is defined as

\begin{equation}
    W_p(I_1, I_2) = \inf_{\pi \in \Pi(I_1, I_2)} \sum_{u \in [N]^2}\sum_{v \in [N]^2}||u-v||^p\pi(u, v),
\end{equation}
where $\Pi(I_1, I_2)$ is the set of joint distributions on $[N]^2$ with marginals $I_1, I_2$ respectively.

In cryo-EM, the Wasserstein-1 metric has been used to understand the conformation space of 3D volumes of molecules \citep{ZeleskoMoscovichKileelSinger2020}. For clustering tomographic projections, we construct a rotationally-invariant $W_p$ distance to be our clustering metric analogously to Eq. \eqref{eq:rotationally_invariant_L2}
\begin{equation} \label{def:rotinv_wasserstein}
    W_p^R(I_1, I_2) := \min_{R \in \mathrm{SO}(2)} W_p(I_1, R I_2).
\end{equation}

\subsection{Clustering with rotationally-invariant metrics}

Algorithm \ref{algo:k-means} describes a generic rotationally-invariant k-means algorithm based on an arbitrary image patch distance metric $d$.
The choice $d = L_2$ gives the rotationally-invariant k-means of \cite{PenczekZhuFrank1996}.
By supplying $d = W_1$ we get a new rotationally-invariant k-means algorithm based on the $W_1$ distance. We choose $p=1$ for the $W_p$ distance as it admits a fast linear-time wavelet approximation as we will see in the next section.
For both choices of the metric, we initialize the centroids using a rotationally-invariant k-means++ initialization which we describe in Algorithm \ref{algo:k++}. When we denote $r \in \mathrm{rot}$ we mean that $r$ performs an in-plane rotation of an image. We approximate the space of all in-plane rotation angles by a discrete set of angles.

\begin{algorithm}[ht]
\SetAlgoLined
\DontPrintSemicolon
\KwParams{$k, n, d$}
\;\vspace{-10pt}
\KwOutput{$\{C_j\}_{j \in [k]}$}
\;\vspace{-11pt}
\KwData{$\{I_i\}_{i \in [n]}$}
\;\vspace{-5pt}
$\{C_j\}_{j \in [k]} := \mathrm{InitializeCenters}(\{I_i\}_{i \in [n]})$\\
\;\vspace{-9pt}

\While{\textnormal{loss decreases}}{
    $\mathrm{loss} = 0$\\
    \For{$i \in [n]$} {
        $(j_i, r_i) := \argmin_{(j,r):j \in [k], r \in \mathrm{rot}} d(I_i, r(C_j))$\\
        $\mathrm{loss} = \mathrm{loss} + d(I_i, r_i(C_{j_i}))^2$\\
    }
    \For{$j \in [k]$}{
        $C_j := \mathrm{mean}(\{r_i^{-1}(I_i) : j_i = j\})$\\
    }
}
\;\vspace{-5pt}
\KwRet$\{C_j\}_{j \in [k]}$

 \caption{Rotationally invariant k-means}
 \label{algo:k-means}
 \end{algorithm}

 \begin{algorithm}[ht]
 \SetAlgoLined
\SetAlgoLined
\DontPrintSemicolon
\KwParams{$k, n, d$}
\;\vspace{-10pt}
\KwOutput{$\{C_j\}_{j \in [k]}$}
\;\vspace{-11pt}
\KwData{$\{I_i\}_{i \in [n]}$}
\;\vspace{-5pt}
$C_1 = \mathrm{RandomSelect}(\{I_i\}_{i \in [n]}$)\\[1pt]
\For{$j \in \{2, ..., k\}$}{
    \For{$i \in [n]$}{
        $p_{i} := \left( \min_{m \in [j-1], r \in \mathrm{rot}} d(I_i, r(C_m)) \right)^2$
    }
    $j = \mathrm{DrawFrom}({\bf p})$
    \tcp{Draw index $j$ with probability proportional to $p_j$}
    $C_j = I_j$
}
\;\vspace{-5pt}
\KwRet $\{C_j\}_{j \in [k]}$
\caption{Rotationally invariant k-means++ initialization}
 \label{algo:k++}
 \end{algorithm}

\subsection{Computing the Earthmover's distance}
Computing the distance $W_1(I_1, I_2)$ (also known as the Earthmover's distance) between two $N \times N$ pixels can be formulated as a linear program in $O(N^2)$ variables and $O(N)$ constraints. Given $n$ images, $k$ centers, and $m$ rotations for each image and $t$ iterations of k-means, we have to compute $O(n\times m \times k \times t)$ distances. Computing the $W_1$ distance between two $100 \times 100$ size images using a standard LP solver takes on average of 5 seconds to compute using the Python Optimal Transport Library of \cite{FlameryCourty2017} on a single core of a 7th generation 2.8 GHz Intel Core i7 processor. Computing the exact $W_1$ distance over all the rotations of all the images over all the iterations is prohibitively slow.

Fortunately, the $W_1$ distance admits a wavelet-based approximation with a runtime that is linear in the number of pixels \citep{ShirdhonkarJacobs2008}. Let $\mathcal{W}I_i$ be the 2D wavelet transform of $I_i$. We can approximate the $W_1$ distance between two images $I_1, I_2$ by a weighted $\ell_1$ distance between their wavelet coefficients,
\begin{align} \label{eq:approximate_emd}
    \widetilde{W_1}(I_1, I_2) := \sum_{\lambda} 2^{-2 \lambda_s}|(\mathcal{W}I_1)(\lambda) - (\mathcal{W}I_2)(\lambda)|,
\end{align}
where  $\lambda$ goes over all the wavelet coefficients and $\lambda_s$ is the scale of the coefficient.
This metric is strongly equivalent to $W_1$. i.e. there exist constants $0 < c < C$ such that for all $I_1, I_2 \in \mathbb{R}^{N^2}$ 
\begin{equation}
    c \cdot \widetilde{W_1}(I_1, I_2) \leq W_1(I_1, I_2) \leq C \cdot \widetilde{W_1}(I_1, I_2).
\end{equation}
Different choices of the wavelet basis give different ratios $C/c$. We chose the symmetric Daubechies wavelets of order 5 due to the quality of their approximation \citep{ShirdhonkarJacobs2008}.
The sum in Eq. \eqref{eq:approximate_emd} was computed with scale up to $6$ using the PyWavelets package \citep{LeeEtal2019}.
%%%%%%%%%%%%%%%%%%%%%%%%%%%%%%%%%%%%%%%%%%%%%%%%%%%%%%%%%%%%%%%%%%%%%%%%%%%%%%%%%%%%%%%%%%%%%%%%%%%%%%%%%%%%%%%%%
\section{Experimental results} \label{sec:results}
%%%%%%%%%%%%%%%%%%%%%%%%%%%%%%%%%%%%%%%%%%%%%%%%%%%%%%%%%%%%%%%%%%%%%%%%%%%%%%%%%%%%%%%%%%%%%%%%%%%%%%%%%%%%%%%%%
\subsection{Dataset generation}
We built a synthetic dataset of 10,000 tomographic projections of the Plasmodium falciparum 80S ribosome bound to the anti-protozoan drug emetine, EMD-2660  \citep{WongEtal2014}, shown in Figure~\ref{fig:experiments-3D-ribosome}.
To generate each image, we randomly rotated the ribosome around its center of mass using a uniform draw of $\mathrm{SO}(3)$, projected it to 2D by summing over the $z$ axis and resized the resulting image to $128 \times 128$.
Finally we added i.i.d. $\mathcal{N}(0,\sigma^2)$ noise to the image at different signal-to-noise (SNR) levels. Given a dataset of images $S \in \mathbb{R}^{N \times N \times n} $, the SNR is defined as
\begin{equation}
    \text{SNR} := \frac{ \|D\|^2}{nN^2\sigma^2}.
\end{equation}
We produce three datasets to run experiments on by adding noise at SNR values $\{1/8, 1/12, 1/16\}$.

\begin{figure}[h]
    \centering
    \includegraphics[width=4cm]{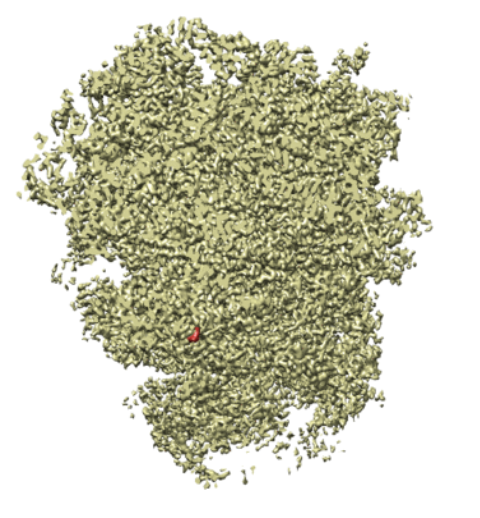}
    \includegraphics[width=9cm]{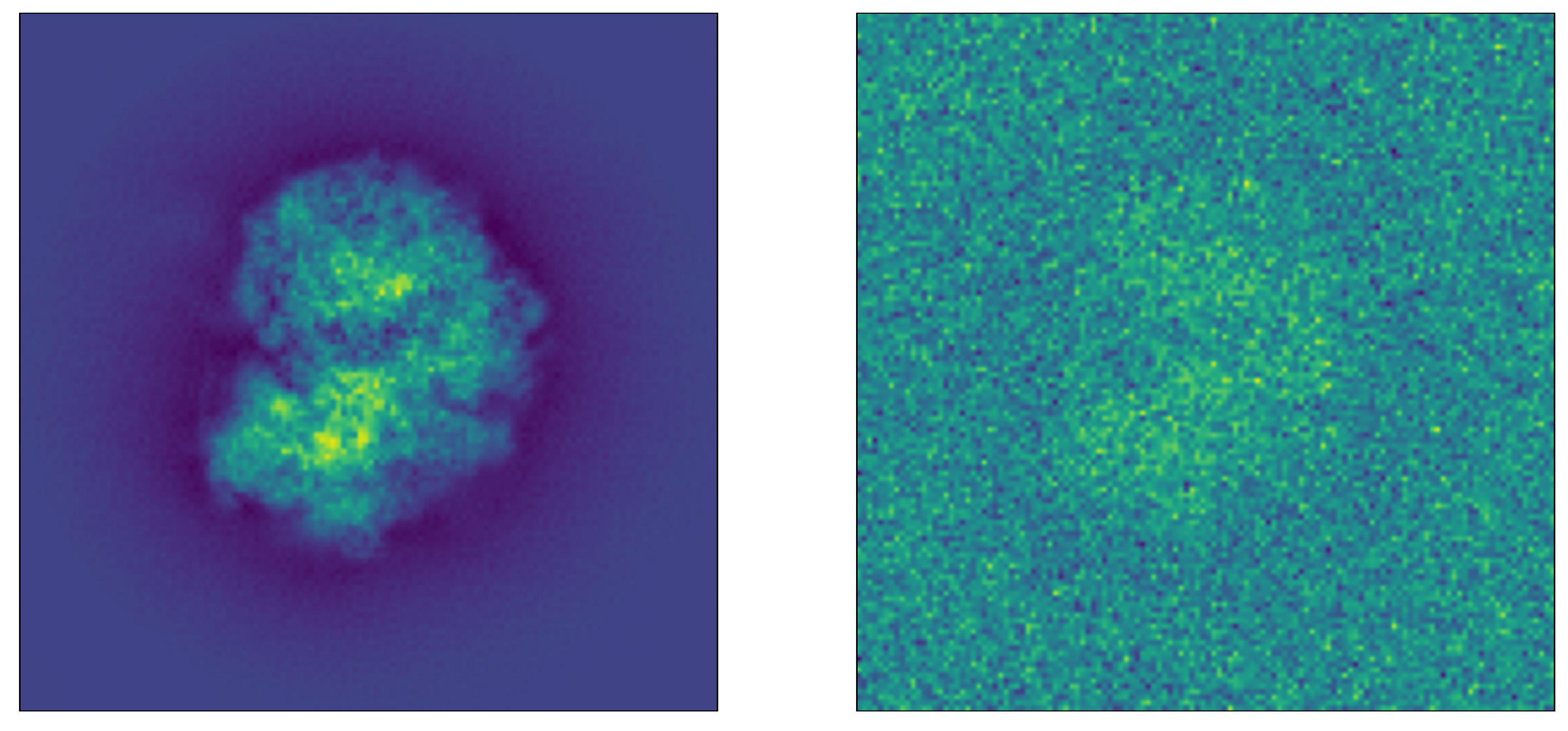}
    \caption{(left) Surface plot of the 3D electrostatic potential of the 80S ribosome; (middle) Example tomographic projection; (right) The same projection with Gaussian noise at SNR=1/16.}
    \label{fig:experiments-3D-ribosome}
\end{figure}

\subsection{Simulation results} \label{sec:results}
We performed rotationally-invariant 2D clustering on our three datasets using rotationally-invariant k-means++ (Algorithms \ref{algo:k-means}, \ref{algo:k++}) with the $W_1$ and $L_2$ distance using $k=150$ clusters.

All in-plane rotations at increments of $\pi/100$ radians were tested in the Algorithm \ref{algo:k-means}. 
To quantify the difference in performance, we computed the distribution of within-cluster viewing angle differences for all pairs of images assigned to the same cluster. For a molecule like the Ribosome that has no symmetries, we expect these angular differences to be concentrated around zero, since large angular differences typically give large differences in the projection images.
For all SNR levels, we see that $W_1$ gives better angular coherence than $L_2$.
Note that some of these distributions have a small peak towards 180 degrees which physically represents our algorithms confusing
antipodal viewing angles.

\begin{figure}
    \includegraphics[width=0.33\textwidth]{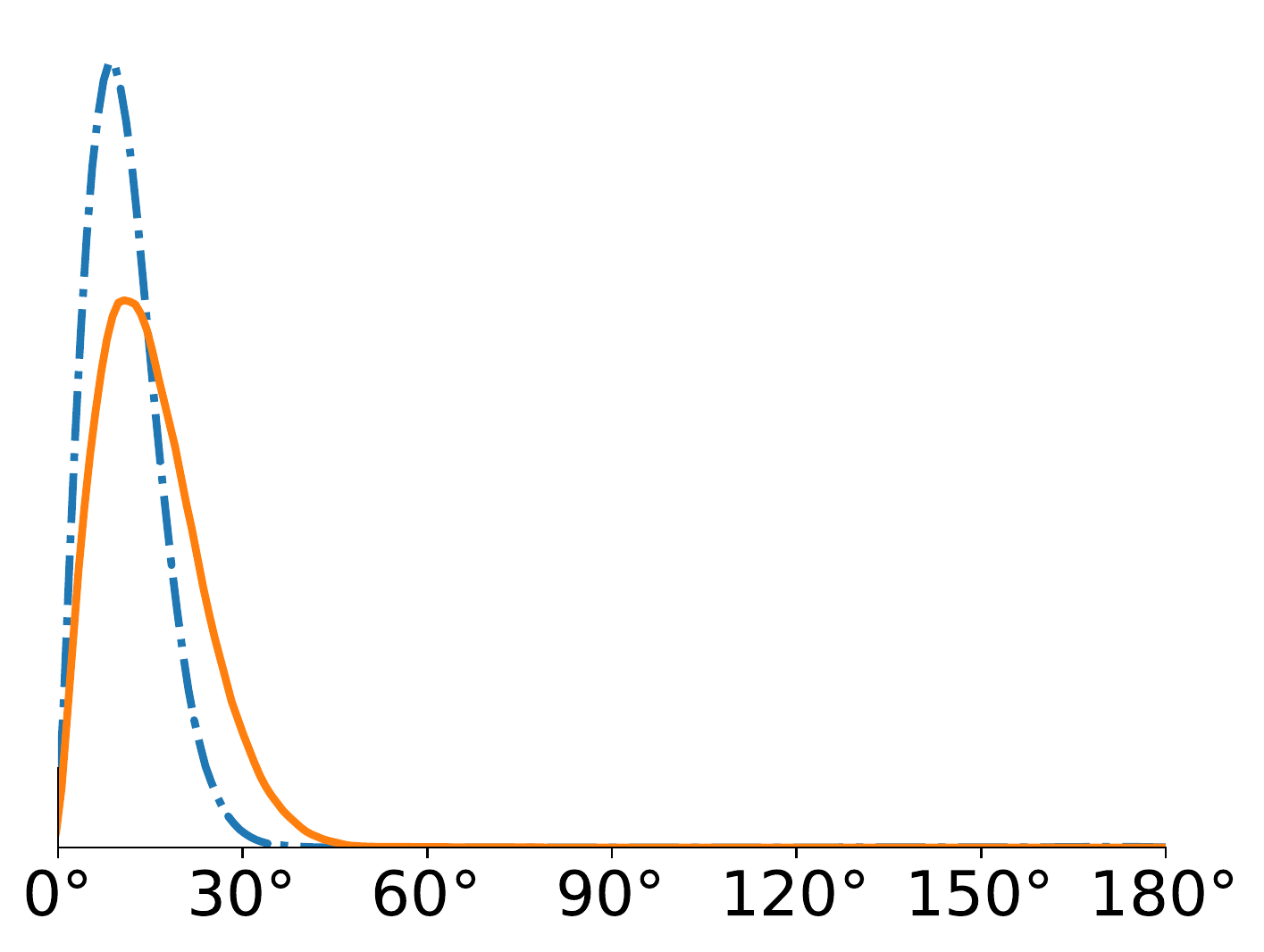}
    \includegraphics[width=0.33\textwidth]{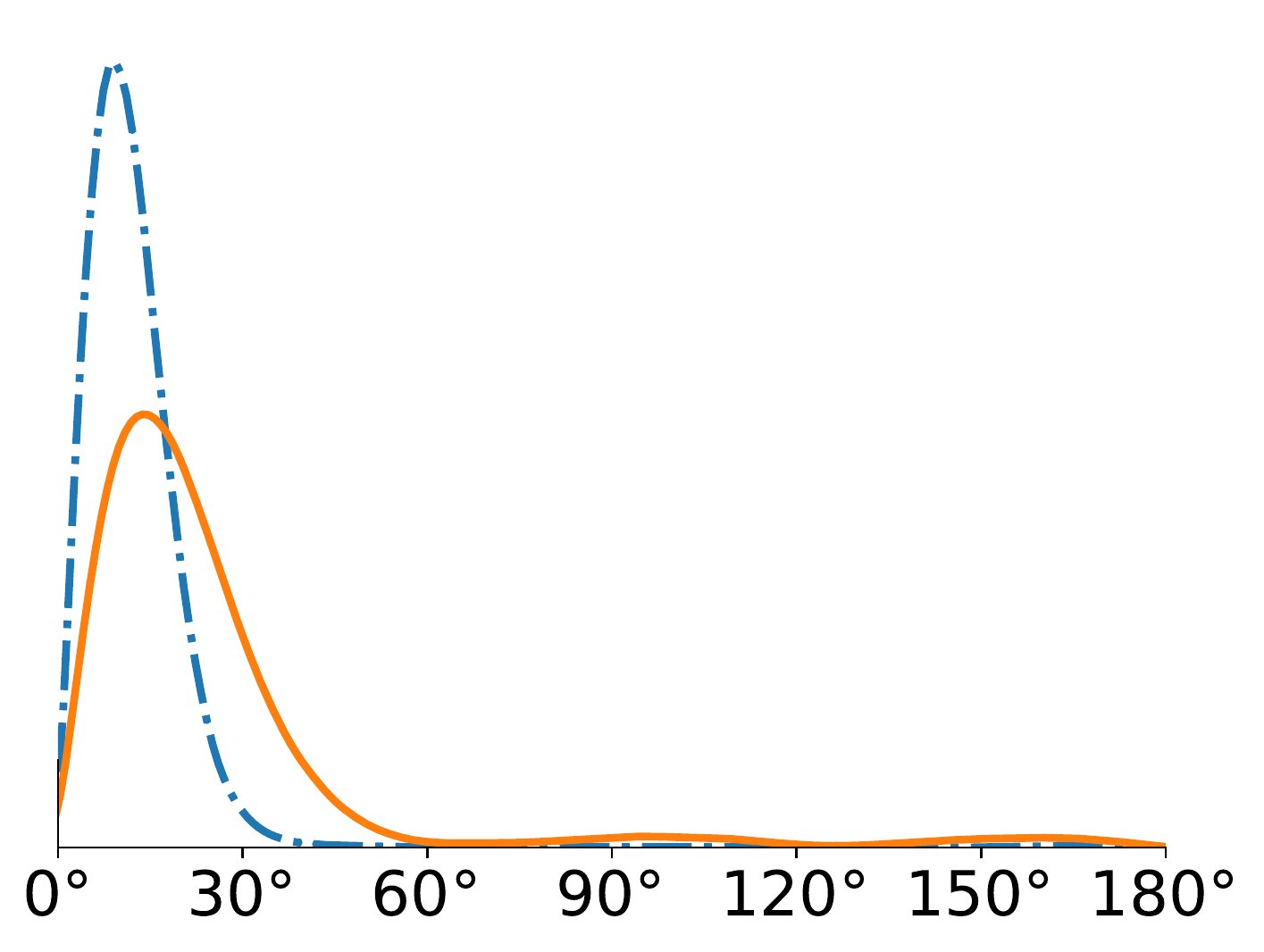}
    \includegraphics[width=0.33\textwidth]{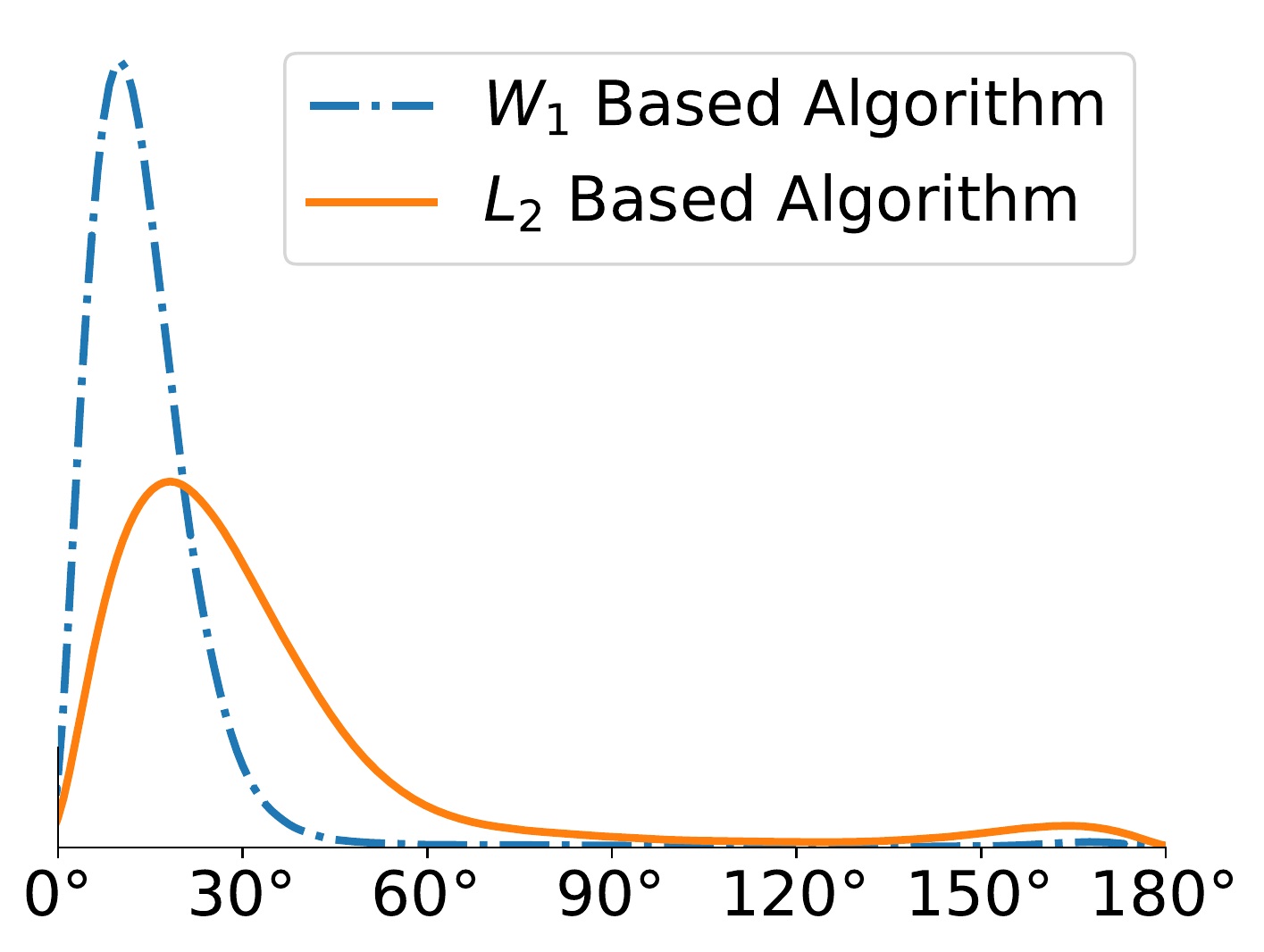}
\caption{Distribution of within-cluster pairwise angular differences (narrower is better) at SNR values $1/8,\ 1/12,\ 1/16$ from left to right.}
\label{fig:angulardistribution}
\end{figure}

In Figure \ref{fig:ribosome_centroids} we show the cluster means  of the 8 largest clusters at SNR $1/16$. Visually, we can see that the algorithm based on  $W_1$ produces sharper mean images.
\begin{figure}
    \centering
    \includegraphics[width=\textwidth]{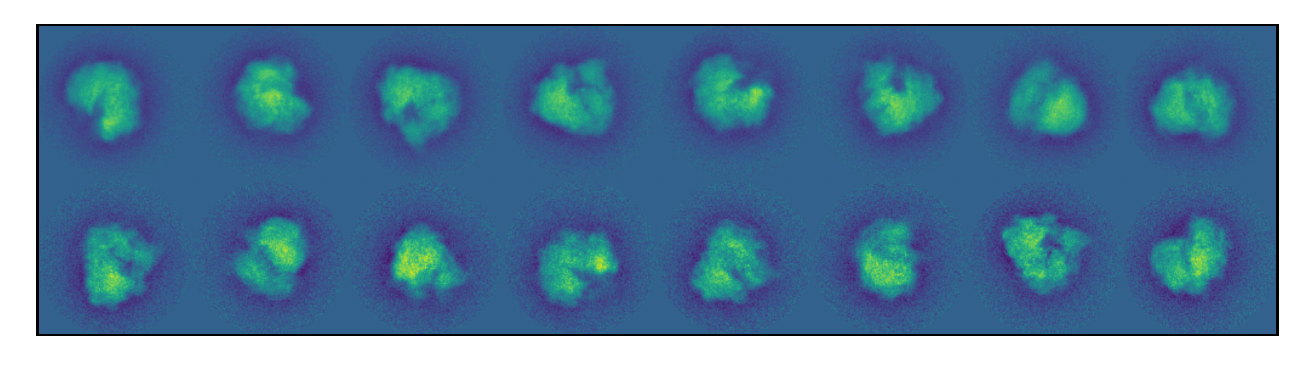}
    \caption{Means of the largest 8 clusters for the SNR=$1/16$ dataset, sorted by cluster size from left to right.
    (top)  $L_2$ distance; (bottom) $W_1$ metric. We can see that the $W_1$ metric preserves more details than the $L_2$ distance. The portion of the image that does not contain our particle appears less noisy in the $L_2$ averages, however this is just an artifact of the cluster size. By averaging a large number of images, the noise decreases but the signal also deteriorates.}
    \label{fig:ribosome_centroids}

\end{figure}
Finally, we examine the occupancy of the clusters. The $W_1$ algorithm provides more evenly sized clusters, whereas for the $L_2$ algorithm we see a few very populated centers and a large dropoff in occupancy in the other clusters.

When clustering images with Gaussian noise, the averages of larger clusters will tend to be less noisy, since the noise variance is inversely proportional to the cluster size. Due to the lower noise levels, more of the images will be assigned to the larger clusters, making them even larger.
This ``rich get richer'' phenomenon has been observed in cryo-EM \citep{SorzanoEtal2010}.
It can explain the large differences in occupancy visible in the top panels of Figure \ref{fig:occupancy}, despite the fact that the angles were drawn uniformly.
The Wasserstein distance is more resilient to i.i.d. noise and this may explain the uniformity in the resulting cluster sizes seen in the bottom panels of Figure \ref{fig:occupancy}.

Finally, clustering with the $W_1$ distance does not increase the runtime of the clustering algorithm by much. We include timing results per iteration of k-means with each metric, along with the number of iterations it took for the algorithm to converge in Table \ref{table:k-means-timing}.
\begin{figure}
    \includegraphics[width=0.33\textwidth]{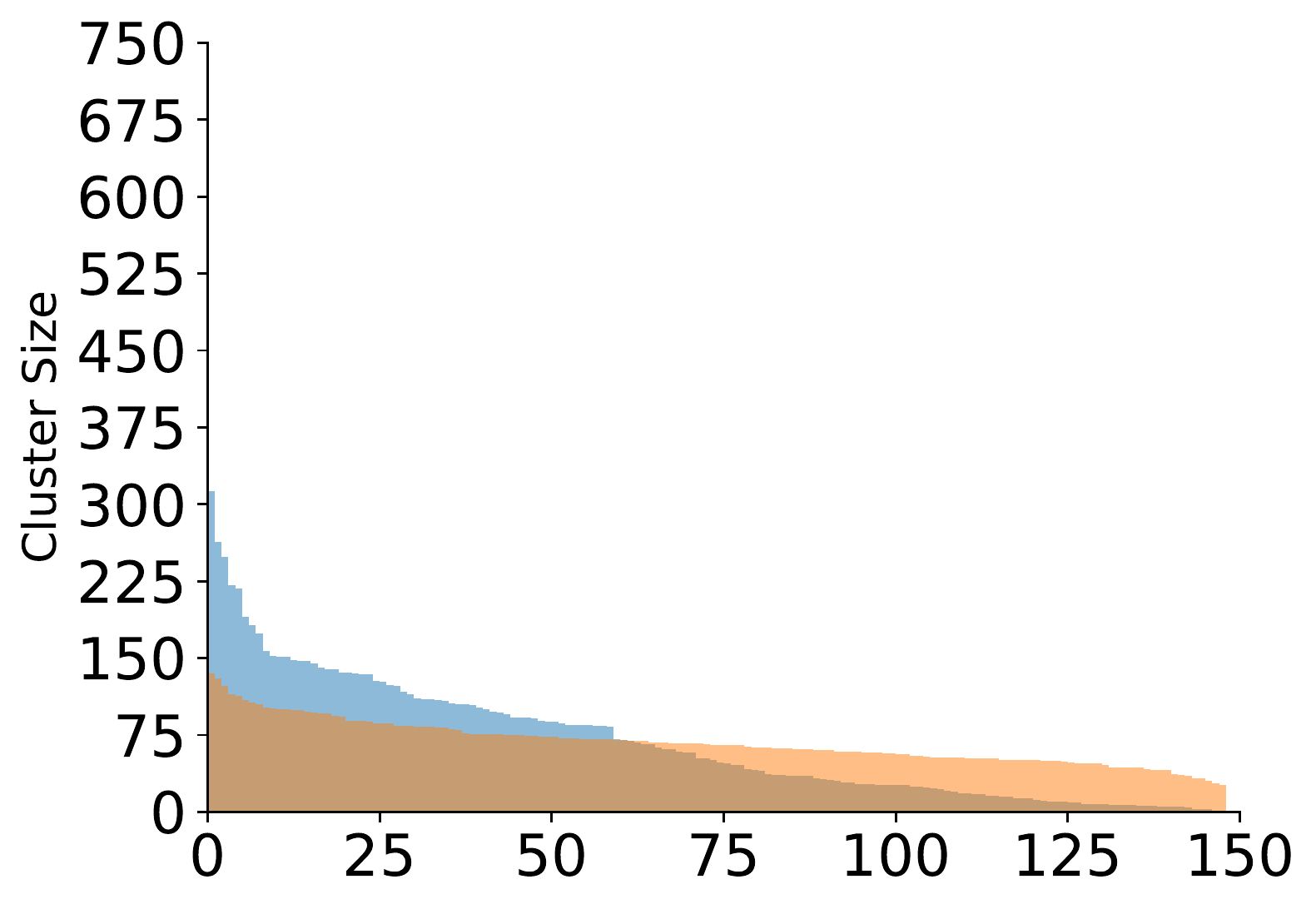}
    \includegraphics[width=0.33\textwidth]{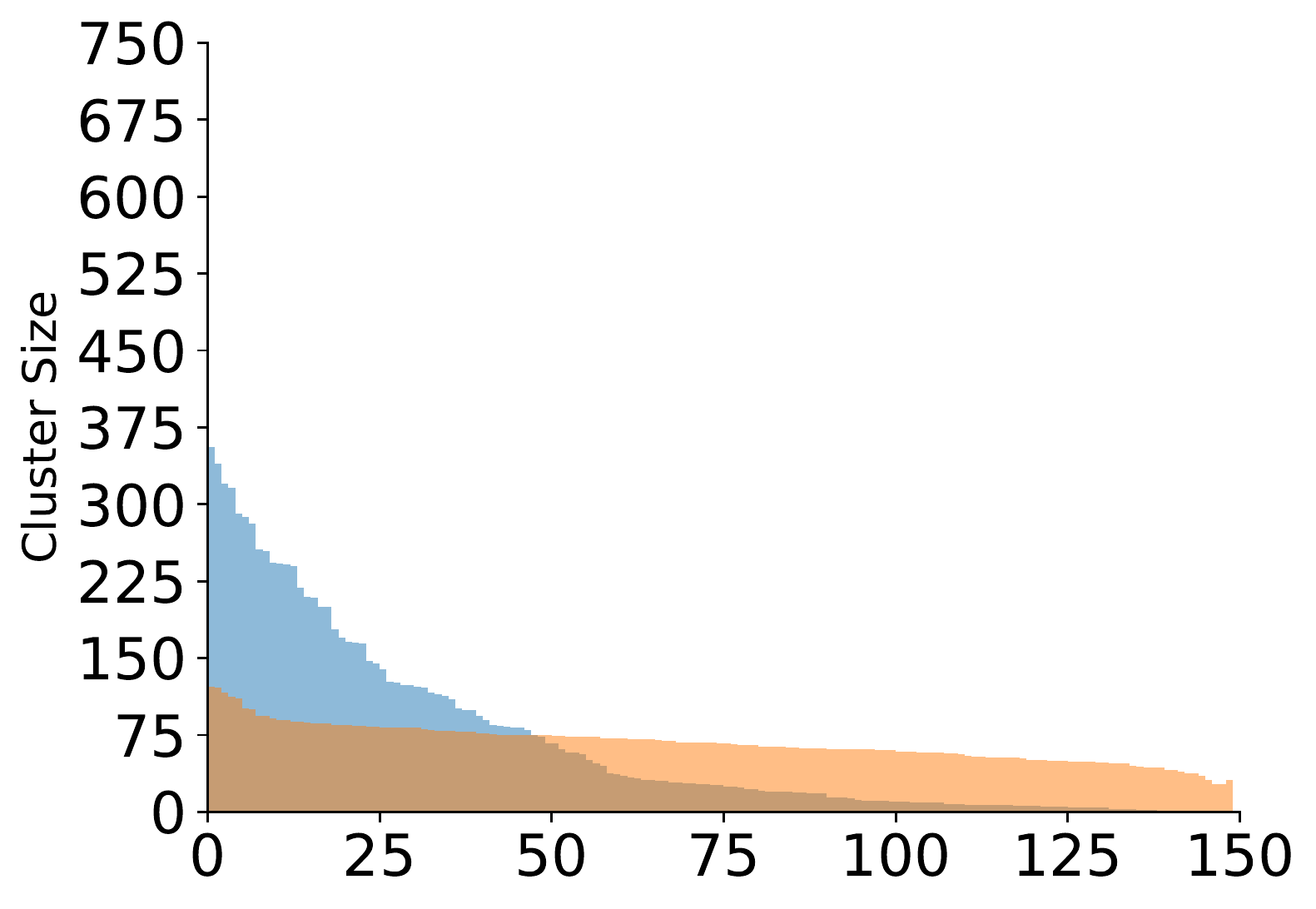}
    \includegraphics[width=0.33\textwidth]{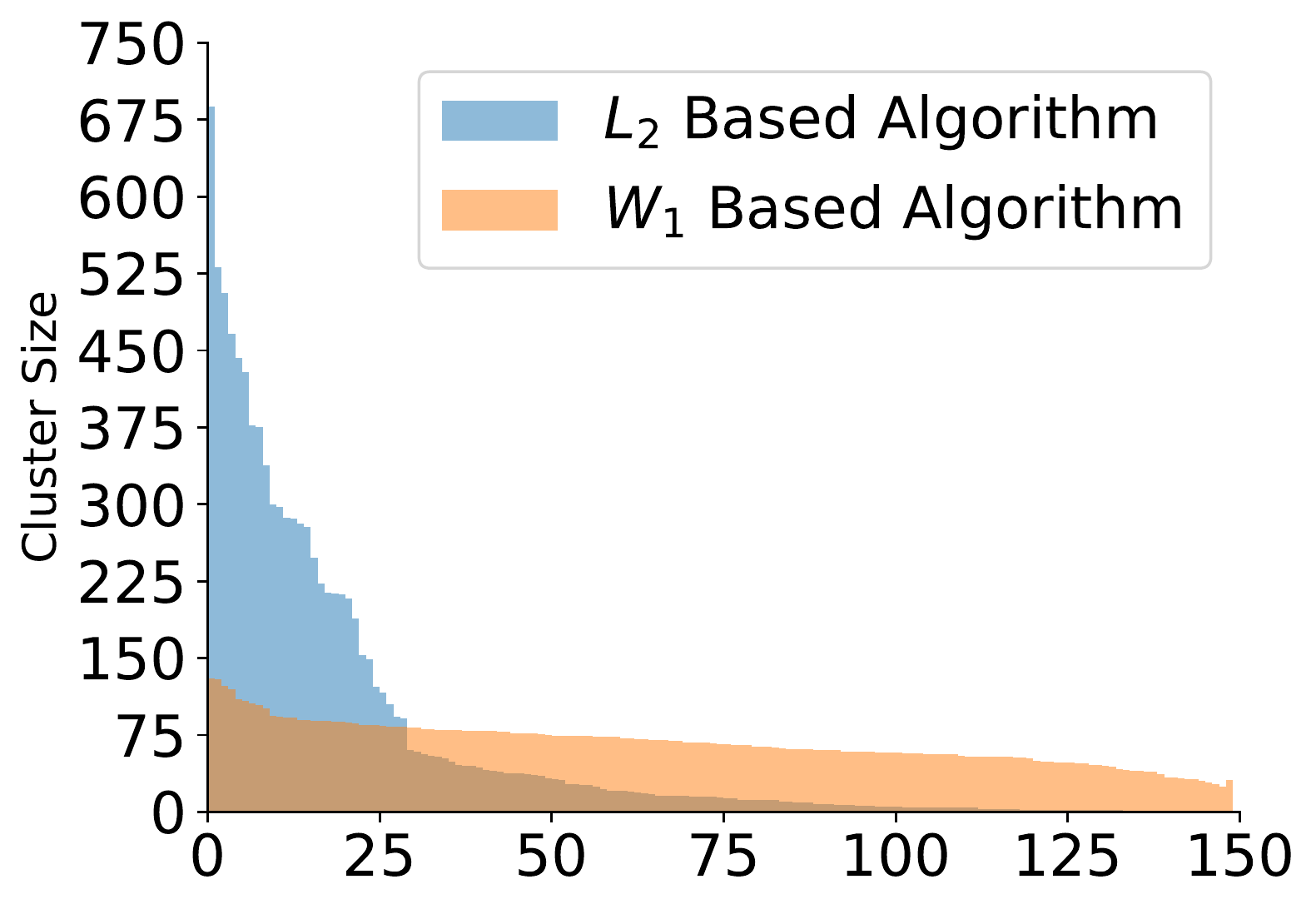}
    \caption{Cluster sizes for the datasets with signal-to-noise $1/8,\, 1/12,\, 1/16$ from left to right.} 
    \label{fig:occupancy}
\end{figure}

\begin{table}[h]
\caption{Seconds per iteration averaged (over two runs) for the $L_2$ and $W_1$ metrics and the number of iterations before
convergence at different SNRs (for one run). The programs were run using 32 Intel core i7 CPU cores which allow us to compute the distances from images in our dataset to all the cluster averages in a parallelized fashion across the averages.}
\label{table:k-means-timing}
\centering
\begin{tabular}{rcccc}
Metric & k-means (sec/iteration) & $n_{\text{iter}}$ (SNR $=1/8$) & $n_{\text{iter}}$ (SNR $=1/12$) & $n_{\text{iter}}$ (SNR $=1/16$)\\
 \toprule
 $L_2$ & 1204 s & 13 & 31 & 22\\
 $W_1$ & 1676 s & 23 & 27 & 26\\
\end{tabular}
\end{table}

\subsection{Sensitivity to noise}
To examine the effect of noise on the $W_1^R$ and $L_2^R$ distances, we plot the $W_1^R$ and $L_2^R$ distances against the viewing angle difference between projections of the ribosome. In Figure \ref{fig:noise} we can see that $W_1^R$ continues to give meaningful distances under noise compared to the $L_2^R$ distance.
\begin{figure}[h]
\centering
    \includegraphics[width=0.4\textwidth]{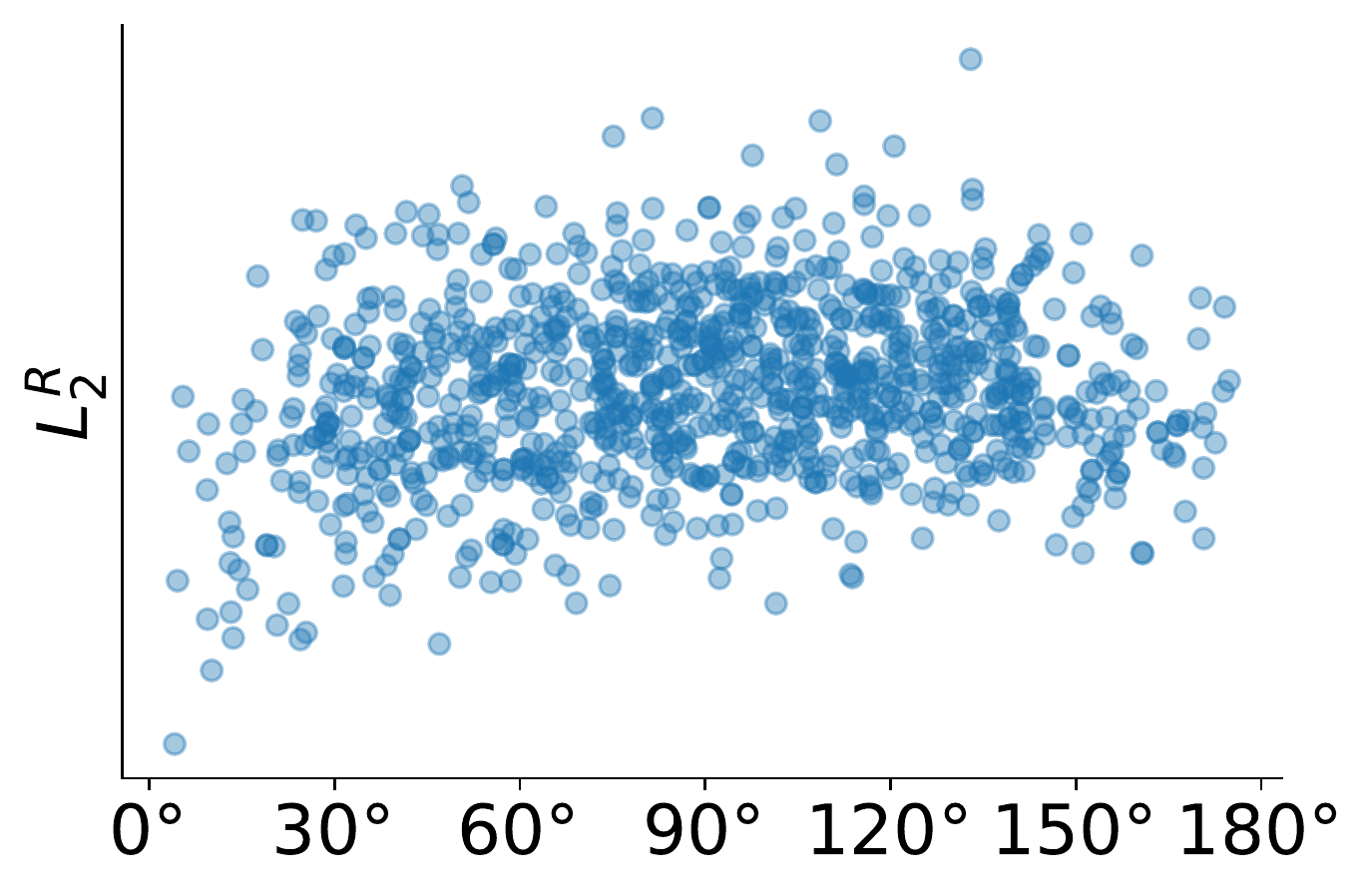}
    \includegraphics[width=0.4\textwidth]{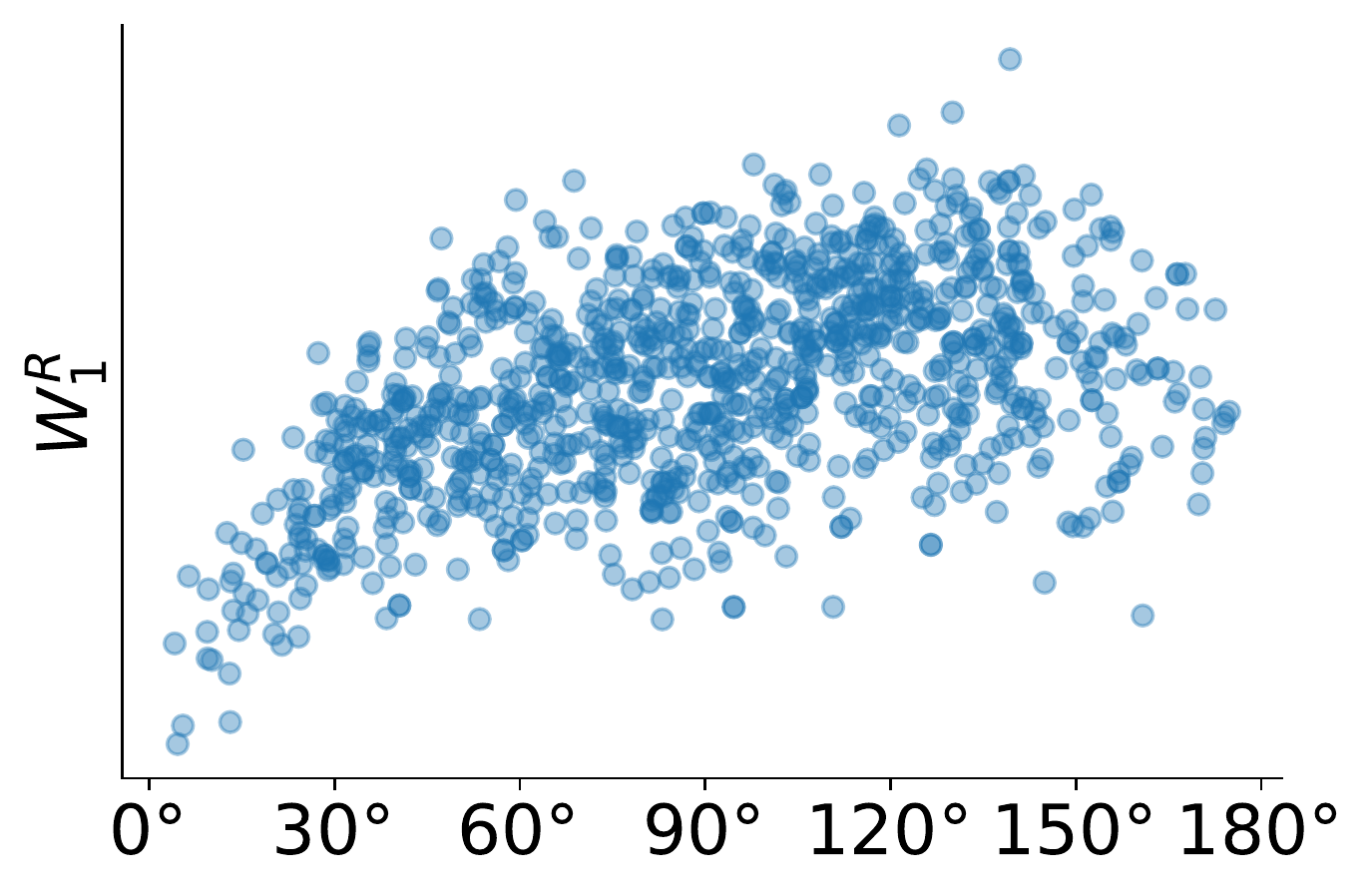}
    \caption{$W_1^R$ and $L_2^R$ distances versus the viewing angle difference between a fixed random clean projection and $1000$ random noisy projections (SNR=1/16)} 
    \label{fig:noise}
\end{figure}
%%%%%%%%%%%%%%%%%%%%%%%%%%%%%%%%%%%%%%%%%%%%%%%%%%%%%%%%%%%%%%%%%%%%%%%%%%%%%%%%%%%%%%%%%%%%%%%%%%%%%%%%%%%%%%%%%
\section{Theory} \label{sec:theory}

For a given particle, two tomographic projections from similar viewing angles typically produce images that are similar.
Hence, it is desirable that a metric for comparing tomographic projections will a assign small distance to projections that have a small difference in viewing angle.
We show that the rotationally-invariant Wasserstein metric satisfies this property.
\begin{proposition} \label{prop:w1UpperBound}
Let $\rho: \mathbb{R}^3 \rightarrow \mathbb{R}_{\geq 0}$ be a probability distribution supported on the 3D unit ball and
let $I_1$ and $I_2$ be its tomographic projections along the vectors $\u$ and $\v$ respectively.
Denote by $\measuredangle(\u,\v) \in [0,\pi]$ the angle between the vectors, then
\begin{align}
    W_p^R(I_1, I_2)^p \leq [2\sin(\measuredangle(\u,\v)/2)]^p \leq \measuredangle(\u, \v)^p
\end{align}
where $W_p^R$ is the rotationally-invariant Wasserstein metric defined in Eq. \eqref{def:rotinv_wasserstein}.
\end{proposition}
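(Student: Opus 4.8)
The plan is to exploit the fact that $I_1$ and $I_2$ are projections of one and the same 3D density $\rho$, which furnishes a canonical coupling between them. Since $W_p^R$ is a minimum over in-plane rotations and $W_p$ is itself an infimum over couplings, it suffices to pick one convenient alignment of the two image planes and exhibit a single transport plan whose cost is at most $[2\sin(\measuredangle(\u,\v)/2)]^p$; such a bound then dominates both the infimum and the minimum.

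First I would fix coordinates adapted to the geometry. Writing $\theta = \measuredangle(\u,\v)$ and taking the (normalized) rotation axis $\mathbf{a} = \u \times \v$ as the second coordinate axis, I can place $\u$ and $\v$ symmetrically about the third axis, $\u = (-\sin(\theta/2),0,\cos(\theta/2))$ and $\v = (\sin(\theta/2),0,\cos(\theta/2))$. I then equip the two image planes $\u^\perp$ and $\v^\perp$ with orthonormal bases sharing the common axis $\mathbf{a}$, namely $\{\mathbf{u}',\mathbf{a}\}$ and $\{\mathbf{v}',\mathbf{a}\}$ with $\mathbf{u}'=(\cos(\theta/2),0,\sin(\theta/2))$ and $\mathbf{v}'=(\cos(\theta/2),0,-\sin(\theta/2))$. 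This pairing of bases corresponds to one admissible in-plane rotation $R\in\mathrm{SO}(2)$ in the definition of $W_p^R$; since $W_p^R$ is the minimum over all such $R$, any cost bound obtained at this alignment bounds $W_p^R$.

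Next I would build the coupling. Viewing the images as pushforward measures of $\rho$, the image $I_1$ is the law of $(x\cdot\mathbf{u}',\,x\cdot\mathbf{a})$ and $I_2$ the law of $(x\cdot\mathbf{v}',\,x\cdot\mathbf{a})$ for $x\sim\rho$. Coupling the two projections of the \emph{same} point $x$ yields a valid plan $\pi$ with the correct marginals $I_1$ and $I_2$. Under this plan the mass at $x$ is displaced by a vector whose second component vanishes (both projections give $x\cdot\mathbf{a}$) and whose first component is $x\cdot\mathbf{u}' - x\cdot\mathbf{v}' = 2x_3\sin(\theta/2)$, so the displacement length equals $2|x_3|\sin(\theta/2)$. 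Because $\rho$ is supported on the unit ball, $|x_3|\le 1$, and the displacement is at most $2\sin(\theta/2)$ pointwise.

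Integrating the $p$-th power of the displacement against $\rho$ then gives
\begin{align}
    W_p^R(I_1,I_2)^p \le \int [2|x_3|\sin(\theta/2)]^p\,\rho(x)\,dx \le [2\sin(\theta/2)]^p\int \rho(x)\,dx = [2\sin(\theta/2)]^p,
\end{align}
using $\int\rho = 1$ and $|x_3|^p\le 1$. The second inequality of the statement is elementary: $2\sin(\theta/2)\le\theta$ follows from $\sin t\le t$ for $t\ge 0$, and raising both nonnegative sides to the power $p$ preserves it. The main obstacle is conceptual rather than computational: recognizing that the shared 3D source $\rho$ supplies the coupling, and then carefully bookkeeping the identification of the two projection planes with a common $\mathbb{R}^2$, so that the chosen alignment of bases is a legitimate element of $\mathrm{SO}(2)$ and the displacement collapses to the one-dimensional quantity $2|x_3|\sin(\theta/2)$.
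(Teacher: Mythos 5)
Your proof is correct, and it reaches the conclusion by a more direct route than the paper. The paper factors the argument into two lemmas: first, that tomographic projection is a contraction for $W_1$ (proved by pushing a 3D transport plan forward to the image planes), and second, a decomposition of the relative orientation into an in-plane rotation $R_1$ and an out-of-plane rotation $R_2$, after which $W_1(\rho,\rho\circ R_2^T)\le 2\sin(\measuredangle(\u,\v)/2)$ is obtained from the Monge map $x\mapsto R_2x$; general $p$ is then handled in a separate corollary via the inequality $W_p \le W_1^p$. Your single explicit coupling --- pairing the two projections of the same point $x\sim\rho$, with the image planes coordinatized by bases sharing the rotation axis $\mathbf{a}=\u\times\v$ --- is precisely the pushforward of that Monge coupling under the two projections, so the underlying transport plan is the same; your symmetric-basis choice plays the role of the paper's in-plane/out-of-plane decomposition. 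What your version buys is twofold. It is self-contained (no separate contraction lemma), and, more importantly, it treats all $p$ uniformly and rigorously: integrating the $p$-th power of the pointwise displacement $2|x_3|\sin(\measuredangle(\u,\v)/2)$ gives the $W_p$ bound directly, sidestepping the paper's corollary step, whose cited inequality $W_p\le W_1^p$ runs against the Jensen direction ($W_1\le W_p$ in the standard normalization) and is not valid in general --- so on this point your argument is on firmer ground than the paper's. One detail you flag but do not actually check: the identification of the two image planes must be orientation-preserving to be a legitimate element of $\mathrm{SO}(2)$ rather than a reflection. It is, since $\mathbf{u}'\times\mathbf{a}=\u$ and $\mathbf{v}'\times\mathbf{a}=\v$, i.e., both frames are positively oriented with respect to their viewing directions; and since $W_p$ is invariant under rotating both images simultaneously, re-coordinatizing $I_1$ as well as $I_2$ is harmless for bounding the minimum over $\mathrm{SO}(2)$.
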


See Appendix \ref{sec:proofs} for the proof.
A similar upper-bound for the rotationally-invariant $L_2$ distance cannot hold for all densities $\rho$.
To see why, consider an off-center point mass. Any two projections at slightly different viewing angles will have a large $L_2^R$ distance no matter how small their angular difference is.
However, for densities with bounded gradients it is possible to produce upper bounds.
\begin{proposition} \label{prop:L2UpperBound}
Let $B = \sup_{\bf x}|\nabla\rho({\bf x})|$ be an upper bound on the absolute gradient of the density.
Under the same conditions of Proposition \ref{prop:w1UpperBound} we have
\begin{align}
    L_2^R(I_1, I_2) \leq 2\sqrt{\pi}B\measuredangle(\u, \v).
\end{align}
\label{eq:l2UpperBound}
\end{proposition}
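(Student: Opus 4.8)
The plan is to bound $L_2^R(I_1,I_2)$ by the *non*-rotated $L_2$ distance $\|I_1 - I_2\|_2$, since $L_2^R = \min_{R}\|I_1 - RI_2\|_2 \leq \|I_1 - I_2\|_2$ (taking $R$ to be the identity). So it suffices to control the plain $L_2$ distance between the two projections. I would then relate this pixel/$L_2$ difference to the angular difference $\measuredangle(\u,\v)$ by tracking how a single point of the density moves in the projection plane as the viewing direction rotates from $\u$ to $\v$, and integrating the resulting pointwise displacement of $\rho$ against the gradient bound $B$.

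\medskip

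First I would set up a clean coordinate description of the two projections. Let $R_0 \in \mathrm{SO}(3)$ be a rotation taking $\u$ to $\v$ through the angle $\theta := \measuredangle(\u,\v)$; then $I_2$ is (up to an in-plane rotation, which the $\min_R$ absorbs) the projection of $R_0\rho$ along $\u$. Writing the projection operator along $\u$ as $\mathcal{T}_\u$, I get
\begin{equation}
    L_2^R(I_1,I_2) \leq \|\mathcal{T}_\u \rho - \mathcal{T}_\u (R_0\rho)\|_2 = \left\| \mathcal{T}_\u\bigl(\rho - R_0\rho\bigr)\right\|_2.
\end{equation}
Next I would estimate the pointwise difference $|\rho(\mathbf{x}) - (R_0\rho)(\mathbf{x})| = |\rho(\mathbf{x}) - \rho(R_0^T\mathbf{x})|$. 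Since $R_0$ is a rotation by angle $\theta$ about the origin, every point $\mathbf{x}$ in the unit ball moves by at most $\|\mathbf{x} - R_0^T\mathbf{x}\| \leq \|\mathbf{x}\|\,\theta \leq \theta$ (using $|\mathbf{x}|\le 1$ on the support and the chord-vs-arc estimate). Combining with the gradient bound via the mean value inequality gives the pointwise estimate $|\rho(\mathbf{x}) - (R_0\rho)(\mathbf{x})| \leq B\theta$.

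\medskip

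With this pointwise bound in hand, the remaining step is to integrate. The projection $\mathcal{T}_\u(\rho - R_0\rho)$ is supported in the unit disk of the plane (since $\rho$ lives in the unit ball), and at each planar point it is an integral over a chord of length at most $2$; but rather than tracking chord lengths carefully I would bound the $L_2$ norm of the projected difference using the $L_\infty$ bound $B\theta$ on the integrand together with the finite support. Concretely, $\|\mathcal{T}_\u(\rho - R_0\rho)\|_2^2$ is an integral over the unit disk (area $\pi$) of a quantity bounded in magnitude by $(2B\theta)^2$, where the factor $2$ comes from the maximal chord length through the unit ball, so $\|\cdot\|_2 \leq 2\sqrt{\pi}\,B\theta$. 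Finally I substitute $\theta = \measuredangle(\u,\v)$ to obtain
\begin{equation}
    L_2^R(I_1,I_2) \leq 2\sqrt{\pi}\,B\,\measuredangle(\u,\v),
\end{equation}
as claimed.

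\medskip

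The main obstacle I anticipate is making the integration step fully rigorous so that exactly the constant $2\sqrt{\pi}$ emerges: one must be careful about whether the $L_\infty$ bound on $\rho - R_0\rho$ propagates to the projection with a chord-length factor, and whether the relevant region of integration is the unit disk (giving the $\sqrt{\pi}$) or a slightly larger set. A secondary subtlety is justifying that the in-plane rotation freedom in $L_2^R$ is correctly aligned so that choosing $R$ to undo the in-plane component of $R_0$ leaves only the out-of-plane discrepancy; the cleanest route is simply to bound by the identity-alignment term as above and absorb all rotational slack into the inequality, which sidesteps having to optimize over $\mathrm{SO}(2)$ explicitly.
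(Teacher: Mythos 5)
Your proposal is correct and follows essentially the same route as the paper's proof: both reduce $L_2^R$ to the plain $L_2$ distance against a purely out-of-plane rotation by $\theta = \measuredangle(\u,\v)$ (absorbing the in-plane component into the minimum over $\mathrm{SO}(2)$), apply the mean value inequality with the gradient bound $B$ together with the chord-versus-arc estimate to get a pointwise bound of order $B\theta$, and then integrate over the chord (length $2$) and the unit disk (area $\pi$) to obtain $2\sqrt{\pi}B\theta$. The constant emerges exactly as you computed it, so the step you flagged as a potential obstacle is handled the same way in the paper.
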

The proof is in Appendix \ref{sec:proofs}.

This bound suggests that $L_2^R$ is a reasonable metric to use for very smooth signals. For non-smooth signals, or signals with very large $B$, this means that there is no guarantee that the $L_2^R$ distance will assign a small distance between projections with a small viewing angle. 
% \subsection{Sensitivity to noise}
% Another important quality to understand is the sensitivity of our choice of metric to noise. For the $L_2$ distance we have the following well known result
% \begin{proposition}
% Let $I_1$ be an $N \times N$ image, and let $I_2 = I_1 + \boldsymbol{\epsilon}$ where $\boldsymbol{\epsilon} \sim \mathcal{N}(0, \sigma^2 I_{N \times N})$ is an image of pure Gaussian noise.
% Then,
% \begin{align}
%     L_2(I_1, I_2)^2 = \| \boldsymbol{\epsilon}\|_2^2 = \sigma^2 \chi^2(N^2)
% \end{align}
% where $\chi^2(k)$ is the chi-squared distribution with $k$ degrees of freedom.
% Its mean is $k$ and its variance is $2k$.
% Thus we see that $L_2(I_1, I_2)^2 = \sigma^2 (k \pm O(\sqrt{k}))$.
% \end{proposition}

%%%%%%%%%%%%%%%%%%%%%%%%%%%%%%%%%%%%%%%%%%%%%%%%%%%%%%%%%%%%%%%%%%%%%%%%%%%%%%%%%%%%%%%%%%%%%%%%%%%%%%%%%%%%%%%%%
\section{Discussion}
%%%%%%%%%%%%%%%%%%%%%%%%%%%%%%%%%%%%%%%%%%%%%%%%%%%%%%%%%%%%%%%%%%%%%%%%%%%%%%%%%%%%%%%%%%%%%%%%%%%%%%%%%%%%%%%%%

    % \item High-level view of the results. How EMD seems to be better than L2: better angular coherence, sharper centroids. Explicitly mention that the runtime doesn't change much!
    % \item Potential implications to practitioners, including easier/faster particle sorting.
    % \item Possible extensions and future work: EM-based algorithm like RELION \cite{Scheres2012b}, Incorporating CTF, accounting for translations not just rotations, incorporating rotationally-invariant denoising procedures \cite{ZhaoSinger2014}, heterogeneous datasets.
From our numerical experiments, we  see that the rotationally-invariant Wasserstein-1 k-means clustering algorithm produces clusters that have more angular coherence and sharper cluster means than the rotationally-invariant $L_2$ clustering algorithm.
Thus, we believe it is a promising alternative to commonly used rotationally-invariant clustering algorithms based on the
$L_2$ distance.

Recently, there has been an explosion of interest in the analysis of molecular samples with continuous variability \citep{NakaneEtal2018,SorzanoEtal2019,MoscovichHaleviAndenSinger2020,LedermanAndenSinger2020,ZhongBeplerDavisBerger2020,DashtiEtal2020}.
In the presence of continuous variability, it is much more challenging to employ 2D class averaging and related techniques.
This is due to the fact that the clusters need to account not only for the entire range of viewing directions but also
for all the possible  3D conformations of the molecule of interest.
In future work we would like to test the performance of Wasserstein-based k-means clustering for datasets with continuous variability.
Wasserstein metrics seem like a natural fit since, by definition, motion of a part of the molecule incurs a distance no greater than the mass of the moving part multiplied by its displacement.

There are many other directions for future work, including incorporating Wasserstein metrics into EM-ML style algorithms \citep{Scheres2014},
Wasserstein barycenters for estimating the cluster centers \citep{CuturiDoucet2014},
rotationally-invariant denoising procedures \citep{ZhaoSinger2014},
incorporating translations, experiments on more realistic datasets with a contrast transfer function and analysis of real datasets.

\section{Broader impact}
Our work demonstrates that the use of Wasserstein metrics can improve the quality of clustering tomographic projections in the context of cryo-electron microscopy. This can reduce the number of images required to produce high quality cluster averages. Improved data efficiency in the cryo-EM pipeline will accelerate discovery in structural biology (with broad implications to medicine) and lower the barriers of entry to cryo-EM for resource-constrained labs around the world.
\appendix
\section{Appendix: proofs} \label{sec:proofs}

\subsection{Proof of Proposition \ref{prop:w1UpperBound}}
Consider a probability measure $\mu$ on some space $X$ and let $T:X \to Y$ be a measurable mapping.
it naturally induces a measure on its image known as the \emph{push-forward
measure} or \emph{image measure} $T \# \mu$, defined by
\begin{align}
    (T\#\mu)(S) = \mu(T^{-1}(S))
\end{align}

\begin{lemma} Given two probability distributions on the unit ball $B \subset \mathbb{R}^3$ $\mu, \eta$,
\begin{equation}
    W_1(P \# \mu, P \# \eta) \leq W_1(\mu, \eta)
\end{equation}
\label{lemma:contraction_emd}
where $P \# \mu$ is the tomographic projection in terms of a pushforward measure of $\mu$ defined on $B$ onto $\mathbb{D}
\subset \mathbb{R}^2$, and the Earthmover's distance for each is defined for the underlying probability space.\\
\end{lemma}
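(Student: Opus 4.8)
The plan is to exploit the fact that the orthogonal projection $P:\mathbb{R}^3\to\mathbb{R}^2$, $P(x,y,z)=(x,y)$, is non-expansive ($1$-Lipschitz) for the Euclidean metrics on $\mathbb{R}^3$ and $\mathbb{R}^2$, and that the Wasserstein-$1$ distance can only contract under such maps. Indeed, dropping the $z$-coordinate gives $\|P\mathbf{x}-P\mathbf{x}'\|^2 = \|\mathbf{x}-\mathbf{x}'\|^2 - (z-z')^2 \le \|\mathbf{x}-\mathbf{x}'\|^2$, hence $\|P\mathbf{x}-P\mathbf{x}'\| \le \|\mathbf{x}-\mathbf{x}'\|$. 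The bulk of the argument is then a general coupling fact, which I would carry out directly.

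First I would invoke the Kantorovich primal formulation and let $\pi \in \Pi(\mu,\eta)$ be an optimal (or $\varepsilon$-optimal) coupling, so that $\int_{B\times B}\|\mathbf{x}-\mathbf{y}\|\,d\pi(\mathbf{x},\mathbf{y}) = W_1(\mu,\eta)$; such a coupling exists because $B$ is compact. The candidate transport plan on the image is the pushforward $(P\times P)\#\pi$. I would verify that its marginals are $P\#\mu$ and $P\#\eta$: for any Borel $S\subset\mathbb{D}$, the first marginal assigns $\pi\big((P\times P)^{-1}(S\times\mathbb{D})\big) = \pi\big(P^{-1}(S)\times B\big) = \mu(P^{-1}(S)) = (P\#\mu)(S)$, and symmetrically for the second, using $(P\times P)^{-1}(S\times\mathbb{D}) = P^{-1}(S)\times B$. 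Hence $(P\times P)\#\pi \in \Pi(P\#\mu, P\#\eta)$ is admissible.

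With admissibility in hand, the estimate would then read, by change of variables and the $1$-Lipschitz bound,
\begin{align}
    W_1(P\#\mu,P\#\eta) &\le \int_{\mathbb{D}\times\mathbb{D}} \|a-b\|\,d\big((P\times P)\#\pi\big)(a,b) \\
    &= \int_{B\times B} \|P\mathbf{x}-P\mathbf{y}\|\,d\pi(\mathbf{x},\mathbf{y}) \\
    &\le \int_{B\times B} \|\mathbf{x}-\mathbf{y}\|\,d\pi(\mathbf{x},\mathbf{y}) = W_1(\mu,\eta).
\end{align}

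The argument is almost entirely bookkeeping; the only genuinely substantive step is verifying that the pushforward coupling carries the correct marginals, which rests on the set identity above. I expect the main obstacle to be purely notational, namely keeping clear which metric (the Euclidean distance on $\mathbb{R}^3$ versus the one on the projected plane $\mathbb{D}\subset\mathbb{R}^2$) appears at each stage, since the lemma compares Wasserstein distances that live in different metric spaces. As an alternative that sidesteps even the marginal computation, I could instead use the Kantorovich--Rubinstein dual: any $1$-Lipschitz test function $g$ on $\mathbb{D}$ yields a $1$-Lipschitz $g\circ P$ on $B$ precisely because $P$ is $1$-Lipschitz, so the dual supremum defining $W_1(P\#\mu,P\#\eta)$ ranges over a family of test functions that embeds into the one defining $W_1(\mu,\eta)$, which immediately gives the inequality.
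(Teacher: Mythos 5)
Your proposal is correct and follows essentially the same route as the paper's proof: push a coupling of $(\mu,\eta)$ forward through $P\times P$, check that the resulting plan has marginals $P\#\mu$ and $P\#\eta$, and use the non-expansiveness of the orthogonal projection together with the change-of-variables formula (your use of an optimal coupling versus the paper's infimum over arbitrary couplings is an immaterial difference). Your Kantorovich--Rubinstein dual argument at the end is a nice, genuinely distinct alternative, but the main argument coincides with the paper's.
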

\begin{proof}
Let $\Gamma$ be any transportation measure on $B \times B$ such that $\Gamma(A, B) = \mu(A), \Gamma(B, A) = \nu(A)$. Let
$T$ map Lesbegue measurable subsets of $B \times B$ to measurable subsets of $\mathbb{D} \times \mathbb{D}$ such that each
element $((x_1, y_1, z_1),(x_2, y_2, z_2))$ in a measurable set of $B \times B$ is mapped to $((x_1, y_2), (x_2, y_2))$.
 Let $J = T \# \Gamma$ where $\#$ denotes the push-forward operator. $J(A,\mathbb{D}) = P \# \mu(A), J(\mathbb{D}, A) = P
\# \nu(A)$. By the change of variables formula for push-forward measures:
\begin{align}
    &\int_{\mathbb{D} \times \mathbb{D}}||(x_1, y_1) - (x_2, y_2)||dJ((x_1, y_1), (x_2, y_2))\\
    &=\int_{B \times B} ||(x_1, y_1) - (x_2, y_2)||d\Gamma((x_1,y_1,z_1),(x_2,y_2,z_2))\\
    &\leq \int_{B \times B} ||(x_1, y_1, z_1) - (x_2,y_2,z_2)||d\Gamma((x_1,y_1,z_1),(x_2,y_2,z_2)).
\end{align}

We  conclude that
\begin{align}
    W_1(\mu, \nu) &= \inf_\Gamma \int_{B \times B} ||u-v||d\Gamma(u,v) 
    \geq \inf_{T \# \Gamma} \int_{\mathbb{D} \times \mathbb{D}} ||u - v||d(T \# \Gamma)(u,v)
    \geq W_1(P \# \mu, P \# \nu),
\end{align}
where the last inequality is because any $J = T \# \Gamma$ is a valid transportation measure for $(P \# \mu, P \# \nu)$.
\end{proof}
\begin{lemma} Let $I_1, I_2$ be tomographic projections of $\rho$ at viewing angles $\u, \v$ respectively.
\begin{equation}
    W_1^R(I_1, I_2) \leq 2\sin \left( \measuredangle(\u, \v)/2 \right) \leq \measuredangle(\u, \v)
\end{equation}
\end{lemma}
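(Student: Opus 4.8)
The plan is to reduce the rotationally-invariant distance to a single, well-chosen in-plane rotation and then combine the contraction property of Lemma \ref{lemma:contraction_emd} with an explicit transport plan in 3D. Let $R_\u, R_\v \in \mathrm{SO}(3)$ be rotations aligning the viewing directions with the $z$-axis, $R_\u\u = \hat z$ and $R_\v\v = \hat z$, so that the two images are the $z$-projections $I_1 = P\#(R_\u\#\rho)$ and $I_2 = P\#(R_\v\#\rho)$, where $P$ is exactly the projection of Lemma \ref{lemma:contraction_emd}. Since $W_1^R(I_1,I_2) = \min_{R \in \mathrm{SO}(2)} W_1(I_1, R\,I_2)$, it suffices to bound $W_1(I_1, R\,I_2)$ for one good choice of the in-plane rotation $R$.

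First I would introduce the minimal 3D rotation connecting the two views: let $\theta = \measuredangle(\u,\v)$ and let $Q \in \mathrm{SO}(3)$ be the rotation by angle $\theta$ about the axis $\u\times\v$ with $Q\v = \u$. The good in-plane rotation will be $R := R_\u Q R_\v^{-1}$ restricted to the image plane. The key observation is that this $R$ is genuinely an element of $\mathrm{SO}(2)$: the 3D map $R_\u Q R_\v^{-1}$ sends $\hat z \mapsto \v \mapsto \u \mapsto \hat z$, hence fixes the $z$-axis and is therefore a rotation about it. Using that rotations about the $z$-axis commute with $P$ (projecting then rotating in-plane equals rotating in 3D then projecting), one then checks the identity $R\,I_2 = P\#\bigl(R_\u\#(Q\#\rho)\bigr)$, so that $I_1$ and $R\,I_2$ are the $z$-projections of the two measures $R_\u\#\rho$ and $R_\u\#(Q\#\rho)$.

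Next I would apply Lemma \ref{lemma:contraction_emd} to these two measures and use that $R_\u$ is an isometry, obtaining
\begin{equation}
    W_1(I_1, R\,I_2) = W_1\bigl(P\#(R_\u\#\rho),\, P\#(R_\u\#(Q\#\rho))\bigr) \leq W_1\bigl(R_\u\#\rho,\, R_\u\#(Q\#\rho)\bigr) = W_1(\rho, Q\#\rho).
\end{equation}
It then remains to bound the 3D distance between $\rho$ and its rotated copy via the admissible deterministic transport plan $x \mapsto Qx$. For a rotation by angle $\theta$, a point at distance $r$ from the axis is displaced by a chord of length $2r\sin(\theta/2)$, and since $\rho$ is supported on the unit ball we have $r \leq \|x\| \leq 1$, so
\begin{equation}
    W_1(\rho, Q\#\rho) \leq \int_B \|x - Qx\|\,d\rho(x) \leq \max_{\|x\| \leq 1}\|x - Qx\| = 2\sin(\theta/2) \leq \theta,
\end{equation}
using that $\rho$ is a probability measure and that $\sin t \leq t$. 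Chaining the two displays with the reduction $W_1^R(I_1,I_2) \leq W_1(I_1, R\,I_2)$ yields the claim.

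I expect the main obstacle to be the geometric bookkeeping in the second step: verifying rigorously that $R = R_\u Q R_\v^{-1}$ restricts to an in-plane rotation and that the commutation identity $R\,I_2 = P\#\bigl(R_\u\#(Q\#\rho)\bigr)$ holds on the nose. This amounts to tracking how the degrees of freedom of $Q$ not consumed in moving $\v$ to $\u$ surface precisely as an element of $\mathrm{SO}(2)$ acting within the image plane, together with a consistent convention for identifying $\u^\perp$ and $\v^\perp$ with $\mathbb{R}^2$. Once this identification is pinned down, the contraction lemma and the elementary chord-length estimate finish the proof with no further difficulty.
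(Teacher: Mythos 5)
Your proof is correct and follows essentially the same route as the paper's: reduce $W_1^R$ to $W_1$ at a single well-chosen in-plane rotation, apply the projection-contraction Lemma \ref{lemma:contraction_emd}, and bound the 3D cost $W_1(\rho, Q\#\rho)$ by the chord length $2\sin(\measuredangle(\u,\v)/2)$ via the Monge map $x \mapsto Qx$ on the unit ball. The only difference is bookkeeping: the paper obtains the in-plane rotation by decomposing the relative rotation as (in-plane) $\circ$ (out-of-plane with axis in the image plane), whereas you construct it explicitly as $R_\u Q R_\v^{-1}$ and verify it fixes the $z$-axis, which in fact makes explicit a step the paper leaves implicit.
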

\begin{proof} The second inequality is immediate, since for all $\theta \ge 0$ we have $\sin(\theta) \le \theta$. We denote the tomographic projection of $\rho$ at orientation $R \in \mathrm{SO}(3)$ as $\mathcal{T}_R\rho$. Without loss of generality, $I_1 = \mathcal{T}_I\rho$ where $I$ is the identity matrix and so $u = (0,0,-1)$ and $I_2 = \mathcal{T}_R\rho =\mathcal{T}_I(\rho \circ R^T)$.
decompose $R = R_1 \circ R_2$ to an in-plane rotation $R_1 \in \mathrm{SO}(2)$ and $R_2$  an out-of-plane rotation (has
axis in $\mathbb{R}^2$).

We observe the chain of inequalities:
\begin{align}
    W_1^R(I_1, I_2) &= W_1^R(I_1, \mathcal{T}_I((\rho \circ R_2^T) \circ R_1^T))\\
    &= W_1^R(I_1, \mathcal{T}_I(\rho \circ R_2^T))\\
    &\leq W_1 (I_1, \mathcal{T}_I(\rho \circ R_2^T))\\
    &=W_1(P \# \rho, P \# (\rho \circ R_2^T))\\
    &\leq W_1( \rho, (\rho \circ R_2^T))\\
    &\leq 2\sin \left( \measuredangle(\u,\v)/2 \right).
\end{align}

The first equality comes from the decomposition of $R$, the second comes from the invariance of $W_1^R$ to in-plane rotations, the third inequality comes from the definition of $W_1^R$, and the fourth equality rewrites $\mathcal{T}_I\rho = P \# \rho$. The fifth inequality comes from Lemma \ref{lemma:contraction_emd}. The final step of this chain comes from the Monge formulation of the Wasserstein metric \citep{PeyreCuturi2019}, from which it follows that for any two probability distributions on $B$, $P,Q$ we have
\begin{equation}
    W_1(P, Q) \leq \int_{B} \|x - F(x)\|dP(x).
\end{equation}
For $F: B \rightarrow B$,  $F \# P = Q$. 
It is immediate from this statement that using the map $F((x,y,z)) = R_2(x,y,z)$ we obtain
that \begin{equation}
    W_1(\rho, \rho \circ R_2^T) \leq \int_{B} ||(x,y,z) - R_2(x,y,z)||d\rho(x,y,z).
\end{equation}

Furthermore, for any vector $(x,y,z)$ with $\|(x,y,z)\| \le 1$, we have
\begin{equation}
    \|(x,y,z) - R_2(x,y,z)\| \leq 2\sin(\theta/2)
\end{equation}
where $\theta$ is the angle of rotation of $R_2$ in its axis-angle representation. This gives,
\begin{equation}
    W_1(\rho, \rho \circ R_2^T) \leq 2\sin \left(\measuredangle(\u,\v)/2\right).
\end{equation}
which completes our proof.
\end{proof}
\begin{corollary}
Let $I_1, I_2$ be tomographic projections of $\rho$ at viewing angles $\u, \v$ respectively.
\begin{equation}
    W_p^R(I_1, I_2) \leq \big[2\sin \left( \measuredangle(\u, \v)/2 \right)\big]^p \leq \measuredangle(\u, \v)^p
\end{equation}
\end{corollary}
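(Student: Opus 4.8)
The plan is to replay the argument of the preceding $W_1$ lemma essentially verbatim, replacing the transport cost $\|u-v\|$ by $\|u-v\|^p$ at every occurrence. With the convention used in this paper, where the Wasserstein-$p$ distance is defined as $\inf_\pi \sum \|u-v\|^p\,\pi(u,v)$ without an outer $p$-th root, this single substitution promotes each of the three analytic ingredients of the $W_1$ argument to its $p$-powered analogue while leaving the logical skeleton untouched, so that the bound lands on $[2\sin(\measuredangle(\u,\v)/2)]^p$ rather than $2\sin(\measuredangle(\u,\v)/2)$.

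First I would dispatch the right-hand inequality. Since $\sin t \le t$ for $t \ge 0$ gives $2\sin(\measuredangle(\u,\v)/2) \le \measuredangle(\u,\v)$, and $t \mapsto t^p$ is nondecreasing on $[0,\infty)$, raising both sides to the $p$-th power yields $[2\sin(\measuredangle(\u,\v)/2)]^p \le \measuredangle(\u,\v)^p$. For the left-hand inequality I would re-establish the two supporting facts with the cost $\|u-v\|^p$. The contraction statement of Lemma \ref{lemma:contraction_emd} generalizes because the orthogonal projection dropping the $z$-coordinate is $1$-Lipschitz, so $\|(x_1,y_1)-(x_2,y_2)\|^p \le \|(x_1,y_1,z_1)-(x_2,y_2,z_2)\|^p$ holds pointwise; the identical push-forward computation then gives $W_p(P\#\mu, P\#\eta) \le W_p(\mu,\eta)$. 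Likewise the Monge upper bound becomes $W_p(\rho, \rho\circ R_2^T) \le \int_B \|x - F(x)\|^p\, d\rho(x)$ for any measurable $F$ with $F\#\rho = \rho\circ R_2^T$, since evaluating the additive cost on the deterministic coupling induced by $F$ still overestimates the infimum over couplings. Taking $F = R_2$, invoking the pointwise displacement bound $\|(x,y,z) - R_2(x,y,z)\| \le 2\sin(\theta/2)$ valid for $\|(x,y,z)\| \le 1$, raising it to the $p$-th power, and integrating against the probability measure $\rho$ gives $W_p(\rho, \rho\circ R_2^T) \le [2\sin(\measuredangle(\u,\v)/2)]^p$.

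I would then assemble exactly the same chain as in the $W_1$ case: decompose $R = R_1 \circ R_2$ into an in-plane rotation $R_1$ and an out-of-plane rotation $R_2$, use invariance of $W_p^R$ under $R_1$, bound $W_p^R \le W_p$, rewrite $\mathcal{T}_I\rho = P\#\rho$, and apply the two generalized facts in order. I do not expect a genuine obstacle; the content is mechanical once the convention is fixed. The one point deserving care is precisely that convention: because the paper's $W_p$ omits the outer $p$-th root, the powers stack cleanly and no root is ever extracted, which is what aligns the conclusion with the exponent $p$ appearing on the right. I would also confirm that the commutation of the in-plane minimization with the rest of the argument is purely geometric and hence independent of $p$, so the decomposition step transfers without modification.
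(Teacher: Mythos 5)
Your proof is correct, but it follows a genuinely different route from the paper's. The paper proves this corollary in one line: it invokes the comparison $W_p(I_1,I_2) \le W_1(I_1,I_2)^p$ (citing Peyr\'e--Cuturi) to transfer the already-proved $W_1$ lemma to $W_p$, and then minimizes over in-plane rotations. You instead rerun the entire $W_1$ argument with the cost $\|u-v\|^p$: the coordinate projection is $1$-Lipschitz, so the contraction lemma survives raising the cost to the $p$-th power; the deterministic (Monge) coupling still upper-bounds the infimum over couplings; and the pointwise displacement bound $\|(x,y,z)-R_2(x,y,z)\| \le 2\sin(\measuredangle(\u,\v)/2)$ on the unit ball is raised to the $p$-th power before integrating against the probability measure $\rho$. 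Your longer route is actually on firmer ground. Under the paper's convention for $W_p$ (no outer $p$-th root), Jensen's inequality gives $W_1(I_1,I_2)^p \le W_p(I_1,I_2)$, so the comparison the paper cites points the wrong way in general: for $\mu=\delta_{-e_1}$ and $\nu=\tfrac12\delta_{-e_1}+\tfrac12\delta_{e_1}$ on the unit ball one has $W_1=1$ while $W_p=2^{p-1}>1=W_1^p$ for $p>1$. The corollary is nevertheless true, precisely because the $W_1$ lemma's bound comes from a deterministic coupling whose displacement is uniformly at most $2\sin(\measuredangle(\u,\v)/2)$, and a uniform pointwise bound powers up cleanly---exactly the structure your proof exploits. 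In short, your argument is not merely an alternative; it supplies the justification that the paper's one-line derivation lacks.
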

Which comes from the fact that $W_p(I_1, I_2) \leq W_1(I_1, I_2)^p$ \citep{PeyreCuturi2019} from which we obtain $W_p^R(I_1, I_2) \leq W_1^R(I_1, I_2)^p$ and the subsequent chain of inequalities.
\subsection{Proof of Proposition \ref{prop:L2UpperBound}}
 Let $\rho$ be differentiable, and $|\nabla\rho| \leq B$. Then for two tomographic projections of $\rho$, $I_1, I_2$ at angles $\v,\u$ we have 
 \begin{equation}
     L_2^R(I_1, I_2) \leq 2\sqrt{\pi}B \measuredangle(\u, \v)
 \end{equation}

\begin{proof} This is a consequence of the mean value theorem. Without loss of generality, let us assume that $I_1 = \mathcal{T}_I(\rho)$ where $I$ is the identity matrix and so $v = (0,0,-1)$. 
Let $I_2= \mathcal{T}_{R} (\rho)$. We can decompose $R$ into an in-plane component $R_1 \in \mathrm{SO}(2)$
and an out-of-plane component $R_2$ such that $R = R_1 \circ R_2$ and $R_2$ has its axis of rotation in $\mathbb{R}^2$. Because $L_2^R(I_1, I_2)$
is invariant to rotations of $I_2$, $L_2^R(I_1, I_2) = L_2^R(I_1, \mathcal{T}_R(\rho
\circ R_1))$ which means that since $R^TR_1 = R_2^T$\\

\begin{align}
    L_2^R(I_1, I_2) &= L_2^R(I_1, \mathcal{T}_I(\rho \circ R_2^T))\\
    &\leq L_2(I_1, \mathcal{T}_I(\rho \circ R_2^T))\\
    &= \sqrt{\int_\mathbb{D}(\int_{-1}^1(\rho(x,y,z) - \rho \circ R_2^T(x,y,z))dz)^2}\\
    &\leq  \sqrt{\int_\mathbb{D} \left( \int_{-1}^1(\sup|\nabla \rho|) |\phi|dz \right)^2} \\
    &\leq 4\sqrt{\pi}B|\sin(\phi/2)| \leq 2\sqrt{\pi}B|\phi|.
\end{align}

The second to last line holds because by the mean value theorem
\begin{equation}
        ||\rho(x,y,z) - \rho \circ R_2^T(x,y,z)||\leq (\sup|\nabla \rho|)||(x,y,z) - R_2^T(x,y,z)||.
\end{equation}
We observe that the distance the point $(x,y,z)$ travels when acted on by $R_2^T$ is equal to the out-of-plane angle $\phi$
that $R_2^T$ rotates by multiplied by the distance from $(x,y,z)$ to the axis of $R_2^T$ which is $\leq 1$. Thus we can upper
bound $||(x,y,z) - R_2^T(x,y,z)||\leq |2 \sin(\phi/2)| \leq |\phi|$.
Since $|\phi| = \measuredangle(\u, \v)$ we achieve the desired upper bound.
\end{proof}

\addcontentsline{toc}{section}{References}
\bibliographystyle{plainnat-edited}
%\raggedright
\small
\bibliography{wasserstein-k-means}

\end{document}